\documentclass[letterpaper, 10 pt, conference]{ieeeconf}
\IEEEoverridecommandlockouts    
\usepackage[english]{babel}
\usepackage{graphicx} 
\graphicspath{{ {./images/} }}
\usepackage{epsfig} 
\usepackage{amsmath} 
\usepackage{amssymb}  
\usepackage{color}
\usepackage{subcaption}
\usepackage{caption}
\usepackage{cite}
\newtheorem{theorem}{Theorem}[section]
\newtheorem{proposition}[theorem]{Proposition}
\newtheorem{corollary}[theorem]{Corollary}

\newtheorem{remark}{Remark}[section]

\usepackage{hyperref}
\usepackage{cleveref}

\begin{document}

\title{Port-Hamiltonian Koopman Operator Synthesis for Mechanical Systems
\author{ Rajpal Singh, Aditya Singh and Jishnu Keshavan}
\thanks{The authors are with the Department of Mechanical Engineering, Indian Institute of Science, Bangalore, Karnataka~560012, India 
    ({\tt\footnotesize{email: rajpalsingh@iisc.ac.in, adityasingh2@iisc.ac.in, kjishnu@iisc.ac.in}}).
    }
}

\maketitle
\IEEEpeerreviewmaketitle

\begin{abstract}
Finite-dimensional Koopman models enable efficient linear prediction and control of nonlinear robotic systems. However, models learned purely from trajectory data may violate the energetic structure of the underlying mechanics, producing predictions that exhibit artificial energy growth and diverge under recursive propagation. This work presents a structure-preserving Koopman framework for Euler-Lagrange systems built on generalized-momentum coordinates. The momentum transformation exposes the mechanical actuation as a known, state-independent port, which is preserved explicitly in the lifted dynamics. A structure-constrained neural architecture is developed to jointly learn the lifting functions and a port-Hamiltonian Koopman generator, rendering the learned dynamics passive by construction rather than through penalty terms or post-hoc projection. A Cayley-midpoint discretization further preserves the corresponding storage-dissipation balance exactly in discrete time. These properties are established analytically by deriving the discrete storage balance and associated stability guarantees of the learned predictor. Simulation and experimental studies demonstrate improved prediction accuracy, data efficiency, and closed-loop tracking over Koopman baselines,  with increasing gains for higher-dimensional systems.

\textit{Index terms} - Koopman operator, port-Hamiltonian systems, Euler-Lagrange systems, structure-preserving learning, data-driven modeling, model predictive control.

\end{abstract}

\section{Introduction}
\label{sec:introduction}
Robotic systems are governed by nonlinear dynamics, which makes the design of accurate and computationally efficient prediction and control methods challenging~\cite{craig1989introduction}. While nonlinear models can capture such dynamics more faithfully, their direct use in real-time optimization-based control often incurs substantial computational cost. A common alternative is therefore to employ linear models obtained through local linearization around nominal operating points, enabling the use of efficient linear control techniques. However, such approximations are inherently local and lose accuracy away from the operating point about which they are constructed.

Koopman operator theory provides an alternative by representing nonlinear dynamics through the linear evolution of suitably chosen observable functions~\cite{koopman1931hamiltonian}. This enables nonlinear systems to be described by linear dynamics in a lifted space without restricting the representation to a local neighborhood. Although the exact Koopman operator is generally infinite-dimensional, finite-dimensional approximations can be constructed from data using predefined dictionaries or learned lifting functions~\cite{edmdwilliams2014, sindybrunton2016, champion2019data}. The resulting lifted linear predictors are attractive for robotic control~\cite{sah2024overview,shi2026koopman}, as they retain nonlinear modeling capability while remaining compatible with computationally efficient linear control frameworks such as Model Predictive Control (MPC).

A central limitation of finite-dimensional Koopman learning is that predictive accuracy does not guarantee physically consistent dynamics. A model may match short-horizon trajectories while still containing spurious unstable modes or exhibiting artificial energy growth under recursive prediction. Recent works have therefore introduced stability and dissipativity constraints into Koopman learning~\cite{hara2020learning,10091950,9867865,sakibNoise,yu2025dissapative}. The method in~\cite{hara2020learning} imposes dissipativity using prior system knowledge, while~\cite{10091950} derives stability conditions and uses predefined basis functions to construct stable models. Similarly,~\cite{9867865,sakibNoise} enforce stability through stable linear parameterizations, with~\cite{9867865} jointly learning the lifting functions and~\cite{sakibNoise} using a predefined Polyflow basis. More recently,~\cite{yu2025dissapative} stabilizes an unconstrained neural Koopman model through minimal post-hoc correction. However, these approaches do not explicitly preserve the energetic structure of mechanical systems. A stable lifted operator need not distinguish conservative energy exchange from physical dissipation or retain the known actuation structure. This motivates a Koopman formulation that embeds these mechanical properties directly into the learned dynamics.

Port-Hamiltonian theory~\cite{porthamschaft} provides a natural framework for this purpose by separating energy storage, conservative interconnection, dissipation, and external ports. Recent work has established an operator-level Koopman decomposition for port-Hamiltonian systems through projection onto a prescribed observable space ~\cite{preciado2026koopman}. While this provides an important connection between Koopman and port-Hamiltonian representations, it does not directly address the data-driven synthesis of a finite-dimensional controlled Koopman model in which the lifting functions, structured generator, and known mechanical actuation are learned or retained jointly. This distinction is particularly important for Euler-Lagrange systems. In conventional position-velocity coordinates, the control channel is configuration dependent through the inertia matrix, so the known mechanical actuation does not naturally appear as a constant input port in a linear lifted model. Moreover, the continuous-time port-Hamiltonian structure must survive the discretization required for prediction and control. These considerations motivate a formulation that simultaneously learns the lifted dynamics, preserves the known actuation port, and retains the associated dissipativity properties in discrete time.

To address these challenges, we formulate Koopman learning in generalized-momentum coordinates~\cite{singh2025generalized}. This transformation absorbs the configuration-dependent inertial contribution into the autonomous dynamics while exposing the known generalized-force distribution as a state-independent actuation port. The resulting coordinates are also naturally compatible with the Hamiltonian description of mechanical energy. Building on this representation, we develop a structure-constrained neural architecture that jointly learns the lifting functions and a port-Hamiltonian Koopman generator while preserving the known actuation port. The generator is parameterized to explicitly encode storage, conservative interconnection, and dissipation in the learned lifted dynamics. Since the resulting predictor is ultimately used in discrete time, we further employ a Cayley-midpoint discretization that preserves the associated storage-dissipation properties of the continuous-time model. The framework is evaluated on serial manipulators of increasing dimension and on a Franka FR3 through open-loop prediction, data efficiency, spectral stability, and closed-loop tracking studies.

The main contributions of this work are as follows:
\begin{itemize}

    \item We formulate a finite-dimensional port-Hamiltonian Koopman representation for controlled Euler-Lagrange systems in generalized-momentum coordinates, which exposes and preserves the known mechanical actuation as a state-independent input port.
    
    \item We introduce a structure-constrained neural architecture that jointly learns the lifting map and port-Hamiltonian Koopman generator from trajectory data, enforcing storage, conservative interconnection, 
    and dissipation structure by construction.

    \item We derive a Cayley-midpoint discrete-time realization and establish exact discrete storage-dissipation balance, zero-input contractivity, and stability guarantees.

    \item We evaluate the framework on systems on a range of serial manipulators including a Franka FR3, demonstrating improved prediction accuracy, data efficiency, spectral stability, and closed-loop tracking over Koopman baselines.
\end{itemize}

The remainder of the paper is organized as follows. Section~\ref{sec:preliminaries} presents the preliminaries. Section~\ref{sec:ph_koopman} develops the proposed port-Hamiltonian Koopman learning framework, and Section~\ref{sec:guarantees} establishes its dissipativity and stability properties. Section~\ref{sec:results} presents the simulation and hardware evaluation, followed by concluding remarks in Section~\ref{sec:conc}.

\section{Preliminaries}
\label{sec:preliminaries}

This section introduces preliminaries on the Koopman operator theory and the Euler-Lagrange dynamics. 

\subsection{Koopman Operator Theory}
\label{sec:koopman_prelim}
Consider a nonlinear autonomous system $\dot{\boldsymbol{x}} {=} \boldsymbol{f}(\boldsymbol{x})$, where $\boldsymbol{x}\in\mathbb{X}\subset\mathbb{R}^{n}$ denotes the state and $\boldsymbol{f}:\mathbb{X}\rightarrow\mathbb{R}^{n}$ is assumed to be Lipschitz continuous over $\mathbb{X}$.  For a sampling interval $h{>}0$, the corresponding discrete-time dynamics at $k^\mathrm{th}$ timestep can be written as $\boldsymbol{x}_{k+1}{=}\boldsymbol{F}_{h}(\boldsymbol{x}_{k})$, where $\boldsymbol{F}_h:\mathbb{X}\rightarrow\mathbb{X}$ denotes the flow map over one sampling interval.
As per Koopman operator theory~\cite{koopman1931hamiltonian}, the evolution of an observable $\sigma:\mathbb{X}\rightarrow\mathbb{C}$ belonging to a Hilbert space $\mathcal{H}$ is described by the infinite-dimensional linear Koopman operator $\mathcal{K}_h:\mathcal{H}\rightarrow\mathcal{H}$ as
\begin{equation}
    \mathcal{K}_h\sigma(\boldsymbol{x}_{k}) = \sigma\!\left(\boldsymbol{F}_h(\boldsymbol{x}_{k})\right) =\sigma(\boldsymbol{x}_{k+1}).
    \label{eq:koopman_operator}
\end{equation}

The Koopman framework can be extended to systems with control inputs. Consider the control-affine dynamics
\begin{equation}
    \label{eq:base_affine}
    \dot{\boldsymbol{x}} = \boldsymbol{f}_{0}(\boldsymbol{x}) + \sum_{i=1}^{m} \boldsymbol{f}_{i}(\boldsymbol{x})u_i,
\end{equation}
where
$\boldsymbol{u}=[u_1,\ldots,u_m]^{\top}\in\mathbb{R}^{m}$ represents the control inputs. $\boldsymbol{f}_{0}$ denotes the drift dynamics, and $\boldsymbol{f}_{i}$ denote the control vector fields. For a finite-dimensional lifting $\boldsymbol{z} = \boldsymbol{\Psi}(\boldsymbol{x}) =
    \begin{bmatrix} \phi_1(\boldsymbol{x}) &
        \phi_2(\boldsymbol{x}) &
        \cdots &
        \phi_{n_z}(\boldsymbol{x})
    \end{bmatrix}^{\top}
    \in\mathbb{R}^{n_z}$, controlled Koopman models are commonly expressed in bilinear or linear
control-affine forms as
\begin{align}
    \label{eq:koopman_bil_con}
    &\dot{\boldsymbol{z}}=\boldsymbol{A}_c\boldsymbol{z}+ \boldsymbol{B}_c(\boldsymbol{z} \otimes \boldsymbol{u}),\;\boldsymbol{x}=\boldsymbol{C}^{x}\boldsymbol{z},\\
    \label{eq:koopman_lin_con}
    &\dot{\boldsymbol{z}}=\boldsymbol{A}_c\boldsymbol{z}+ \boldsymbol{B}_c\boldsymbol{u},\;\boldsymbol{x}=\boldsymbol{C}^{x}\boldsymbol{z}
\end{align}
where $\boldsymbol{A}_c\in\mathbb{R}^{n_z\times n_z}$ and $\boldsymbol{C}^x\in\mathbb{R}^{n\times n_z}$. The input matrix $\boldsymbol{B}_c\in\mathbb{R}^{n_z\times n_zm}$ for the bilinear representation and $\boldsymbol{B}_c\in\mathbb{R}^{n_z\times m}$ for the linear representation. Further, $\otimes$ denotes the Kronecker product. For the linear lifted model considered in this work, the corresponding
discrete-time predictor is
\begin{align}
    \label{eq:koopman_lin_disc}
    \boldsymbol{z}_{k+1} = \boldsymbol{A}\boldsymbol{z}_{k} + \boldsymbol{B}\boldsymbol{u}_{k}, \; \boldsymbol{x}_{k+1}=\boldsymbol{C}^{x}\boldsymbol{z}_{k+1}.
\end{align}
A more comprehensive treatment can be found in~\cite{goswami2017global, singh2025adaptive}.
\subsection{Euler-Lagrange Dynamics}
An Euler-Lagrange mechanical system, with generalized coordinates $\boldsymbol{q}{\in}\mathbb{R}^{n_q}$ and control input $\boldsymbol{u}{\in}\mathbb{R}^{m}$, is described as 
\begin{equation}
    \label{eq:EL}
    \boldsymbol{M}(\boldsymbol{q})\ddot{\boldsymbol{q}} + \boldsymbol{C}(\boldsymbol{q},\dot{\boldsymbol{q}}) \dot{\boldsymbol{q}} + \boldsymbol{g}(\boldsymbol{q}) = \boldsymbol{S}_a\boldsymbol{u}, 
\end{equation}
where $\boldsymbol{M}(\boldsymbol{q}){\succ}0$ is the inertia matrix, $\boldsymbol{C}(\boldsymbol{q},\dot{\boldsymbol{q}})$ is the Coriolis matrix, $\boldsymbol{g}(\boldsymbol{q}) {=} \nabla_{\boldsymbol q}V(\boldsymbol q)$ denotes the generalized conservative forces, $V(\boldsymbol{q})$ denotes the potential energy and $\boldsymbol{S}_a\in\mathbb{R}^{n_q\times m}$ is a constant generalized-force distribution matrix. For direct full actuation, $\boldsymbol{S}_a=\boldsymbol{I}_{n_q}$, whereas $m{<}n_q$ corresponds
to an underactuated system with fewer independent control inputs than generalized coordinates. Using the conventional position-velocity state $\boldsymbol{x}_v {=}[\boldsymbol{q}^\top,\dot{\boldsymbol{q}}^\top]^\top$,
the dynamics become
\begin{equation}
    \dot{\boldsymbol{x}}_v
    {=}
    \begin{bmatrix}
        \dot{\boldsymbol{q}}\\
        -\boldsymbol{M}^{-1}(\boldsymbol{q})
        \left[
            \boldsymbol{C}(\boldsymbol{q},\dot{\boldsymbol{q}})
            \dot{\boldsymbol{q}}
            {+}
            \boldsymbol{g}(\boldsymbol{q})
        \right]
    \end{bmatrix}
    {+}
    \begin{bmatrix}
        \boldsymbol{0}\\
        \boldsymbol{M}^{-1}(\boldsymbol{q})\boldsymbol{S}_a
    \end{bmatrix}
    \boldsymbol{u}. \nonumber
    \label{eq:velocity_state}
\end{equation}

The input distribution in the position-velocity state depends explicitly on $\boldsymbol{M}^{-1}(\boldsymbol{q})\boldsymbol{S}_a$. Consequently, the control vector fields are state dependent, naturally introducing state-input coupling in a lifted representation and motivating bilinear controlled Koopman models such as \eqref{eq:koopman_bil_con}. Enforcing a constant input matrix in these coordinates instead requires this configuration dependence to be absorbed into the learned finite-dimensional model. The generalized-momentum representation~\cite{singh2025generalized} introduced next removes this configuration dependence from the actuation channel.

\subsection{Generalized-Momentum Representation}

Defining the generalized momentum as $\boldsymbol{p}=\boldsymbol{M}(\boldsymbol{q})\dot{\boldsymbol{q}}$, we introduce the position-momentum state
$\boldsymbol{x} = [\boldsymbol{q}^\top,\, \boldsymbol{p}^\top]^\top\in\mathbb{R}^{2n_q}$. The corresponding mechanical Hamiltonian is 
\begin{equation} 
    \label{eq:physical_H}
    H(\boldsymbol{q},\boldsymbol{p})=\frac{1}{2}\boldsymbol{p}^{\top}\boldsymbol{M}^{-1}(\boldsymbol{q})\boldsymbol{p}+V(\boldsymbol{q}).
\end{equation}

In these coordinates, the Euler-Lagrange dynamics admit the Hamiltonian representation~\cite{porthamschaft}
\begin{equation} 
    \dot{\boldsymbol{x}}=\begin{bmatrix}
        \boldsymbol{0} & \boldsymbol{I}\\
        -\boldsymbol{I} & \boldsymbol{0}
    \end{bmatrix} 
    \nabla_{\boldsymbol{x}}H(\boldsymbol{x}) + \underbrace{
    \begin{bmatrix}
        \boldsymbol{0}\\
        \boldsymbol{S}_a
    \end{bmatrix}}_{\boldsymbol{G}_a} \boldsymbol{u},
    \label{eq:momentum_hamiltonian}
\end{equation}
where
$\boldsymbol{G}_a{\in}\mathbb{R}^{2n_q\times m}$ denotes the constant mechanical actuation port. The generalized-momentum transformation provides two structural advantages. First, it expresses the dynamics in the Hamiltonian variables $(\boldsymbol{q},\boldsymbol{p})$. Second, the configuration-dependent input distribution $\boldsymbol{M}^{-1}(\boldsymbol{q})\boldsymbol{S}_a$ in the position-velocity representation is replaced by the constant mechanical port $\boldsymbol{G}_a=[\boldsymbol{0}^{\top},\boldsymbol{S}_a^{\top}]^{\top}$. This makes the generalized-momentum representation particularly suitable for constructing a controlled Koopman model with an explicitly prescribed actuation channel.

\subsection{Koopman Model in Generalized-Momentum Coordinates}

Let $\boldsymbol{\phi}:\mathbb{R}^{2n_q}\rightarrow\mathbb{R}^{n_\phi}$ denote a set of learnable observable functions. Using the
generalized-momentum state, we define the finite-dimensional lifting
\begin{equation}
    \boldsymbol{z} = \boldsymbol{\Psi}(\boldsymbol{x}) = \begin{bmatrix}
        \boldsymbol{x}\\
        \boldsymbol{\phi}(\boldsymbol{x})
    \end{bmatrix} \in\mathbb{R}^{n_z},\;
    n_z=2n_q+n_\phi.
    \label{eq:lift}
\end{equation}
Since the physical state is included explicitly in the lifted coordinates, it is recovered through the fixed projection $\boldsymbol{x} =\boldsymbol{C}^{x}\boldsymbol{z}$, where $\boldsymbol{C}^{x} = \begin{bmatrix}\boldsymbol{I}_{2n_q} &\boldsymbol{0}_{2n_q\times n_\phi}\end{bmatrix}.$

Following the control-coherent construction~\cite{asada2024control, singh2025generalized}, we consider a continuous-time Koopman predictor
\begin{equation}
    \dot{\boldsymbol{z}}
    =
    \boldsymbol{A}_c\boldsymbol{z}
    +
    \boldsymbol{B}_c\boldsymbol{u},
    \label{eq:koopman_continuous}
\end{equation}
where the actuation structure exposed by the generalized-momentum coordinates is preserved explicitly by prescribing
\begin{equation}
    \boldsymbol{B}_c
    =
    \begin{bmatrix}
        \boldsymbol{0}_{n_q\times m}\\
        \boldsymbol{S}_a\\
        \boldsymbol{0}_{n_\phi\times m}
    \end{bmatrix}
    \in\mathbb{R}^{n_z\times m}.
    \label{eq:Bc}
\end{equation}
The remaining task is to construct $\boldsymbol{A}_c$ to capture the nonlinear dynamics while preserving the underlying interconnection and dissipation structure. This motivates the port-Hamiltonian parameterization developed next.

\section{Port-Hamiltonian Koopman Learning}
\label{sec:ph_koopman}

This section introduces the port-Hamiltonian representation imposed on the finite-dimensional Koopman generator and the corresponding neural architecture used to learn it.

\subsection{Port-Hamiltonian Structure}

A port-Hamiltonian system~\cite{porthamschaft} can be represented as
\begin{equation}
 \dot{\boldsymbol{\xi}} =\left(\boldsymbol{\mathcal{J}}(\boldsymbol{\xi})-\boldsymbol{\mathcal{R}}(\boldsymbol{\xi}) \right)\nabla H(\boldsymbol{\xi})+\boldsymbol{\mathcal{G}}(\boldsymbol{\xi})\boldsymbol{u},\label{eq:ph_general}
\end{equation}
where
$\boldsymbol{\mathcal{J}}^\top=-\boldsymbol{\mathcal{J}}$ describes
the conservative interconnection,
$\boldsymbol{\mathcal{R}}^\top=\boldsymbol{\mathcal{R}}\succeq0$
represents dissipation,
$H$ denotes the Hamiltonian, and
$\boldsymbol{\mathcal{G}}$ defines the external input port.
Defining the power-conjugate output as $ \boldsymbol{y} = \boldsymbol{\mathcal{G}}^\top\nabla H,$
the corresponding energy balance is
\begin{equation}
    \dot{H}
    =
    -
    \nabla H^\top
    \boldsymbol{\mathcal{R}}
    \nabla H
    +
    \boldsymbol{y}^{\top}\boldsymbol{u}
    \leq
    \boldsymbol{y}^{\top}\boldsymbol{u}.
    \label{eq:ph_power_balance}
\end{equation}

Motivated by this structure, and by the operator-level connection between port-Hamiltonian systems and Koopman generators established in~\cite{preciado2026koopman}, we impose an analogous storage-interconnection-dissipation structure on the finite-dimensional Koopman generator while retaining the known mechanical input port in~\eqref{eq:Bc}.

 \subsection{Lifted Storage and Structured Koopman Generator}

We associate the lifted coordinates with the quadratic storage function
\begin{equation}
    \tilde{H}(\boldsymbol{z}) = \frac{1}{2}\boldsymbol{z}^{\top}\boldsymbol{S}\boldsymbol{z},\;\boldsymbol{S}=\boldsymbol{S}^{\top}\succ0,
    \label{eq:lifted_storage}
\end{equation}
with $\nabla_{\boldsymbol{z}}\tilde{H}(\boldsymbol{z}) =\boldsymbol{S}\boldsymbol{z}$. The lifted storage $\tilde{H}$ is not assumed to coincide with the physical Hamiltonian of the Euler-Lagrange system. Instead, $\boldsymbol{S}$ defines a quadratic storage metric in the finite-dimensional Koopman space. The continuous-time Koopman generator is parameterized as
\begin{equation}
\boldsymbol{A}_c=\boldsymbol{S}^{-1}\left(\boldsymbol{J}-\boldsymbol{R}\right)
    \label{eq:Ac_ph}
\end{equation}
where $\boldsymbol{J}^{\top}=-\boldsymbol{J}$ and $\boldsymbol{R}^{\top}=\boldsymbol{R}\succeq0$.  
The parameterization in~\eqref{eq:Ac_ph} admits an equivalent port-Hamiltonian representation.  Defining $\boldsymbol{J}_{\mathrm{pH}}=\boldsymbol{S}^{-1}\boldsymbol{J}\boldsymbol{S}^{-1}, \boldsymbol{R}_{\mathrm{pH}}=\boldsymbol{S}^{-1}\boldsymbol{R}\boldsymbol{S}^{-1}$
yields
\begin{equation}
    \boldsymbol{A}_c\boldsymbol{z}=\left(\boldsymbol{J}_{\mathrm{pH}}-\boldsymbol{R}_{\mathrm{pH}}\right)\nabla_{\boldsymbol{z}}\tilde{H}(\boldsymbol{z}),
\end{equation}
where $\boldsymbol{J}_{\mathrm{pH}}^{\top}=-\boldsymbol{J}_{\mathrm{pH}}$ and $\boldsymbol{R}_{\mathrm{pH}}\succeq0$. Although the two representations are equivalent, we use~\eqref{eq:Ac_ph} since it expresses the storage-dissipation relation directly in terms of the parameterized matrix $\boldsymbol{R}$, simplifying the subsequent analysis.

\begin{remark}
A related port-Hamiltonian Koopman formulation is developed in~\cite{preciado2026koopman}, where the generator decomposition is approximated through weak Galerkin projection onto a prescribed dictionary. The present work differs in three key aspects. First, while the physical port map in~\cite{preciado2026koopman} may be state dependent, generalized-momentum coordinates expose a state-independent mechanical port, allowing $\boldsymbol{B}_c$ to be prescribed directly from the known actuation structure. Second, the observables and structured generator are learned jointly from trajectory data for robotic mechanical systems. Third, the learned continuous-time model is mapped to a discrete Koopman predictor using a Cayley-midpoint discretization that preserves an exact discrete storage balance.
\end{remark}

\subsection{Learnable Parameterization}
To enforce the required structural properties throughout training, the matrices $\boldsymbol{S}$, $\boldsymbol{J}$, and $\boldsymbol{R}$ are
parameterized directly as 
\begin{align}
\label{eq:LKW_parameterization}
    \boldsymbol{S} {=} \boldsymbol{L}\boldsymbol{L}^{\top} {+} \epsilon_s\boldsymbol{I}_{n_z},\; \boldsymbol{J} {=}\boldsymbol{K}{-}\boldsymbol{K}^{\top}, \;\boldsymbol{R}{=}\boldsymbol{W}\boldsymbol{W}^{\top}{+}\epsilon_d\boldsymbol{I}_{n_z},
\end{align}
where $\epsilon_s>0$ and $\epsilon_d\geq0$. Here, $\boldsymbol{L}\in\mathbb{R}^{n_z\times n_z}$ is a learnable lower-triangular matrix, $\boldsymbol{K}\in\mathbb{R}^{n_z\times n_z}$ is unconstrained, and $\boldsymbol{W}\in\mathbb{R}^{n_z\times r}$ is learnable with $r\leq n_z$. These parameterizations ensure $\boldsymbol{S}\succ0$, $\boldsymbol{J}^{\top}=-\boldsymbol{J}$, and $\boldsymbol{R}\succeq0$ by construction. The parameter $r$ bounds the rank of the learned dissipative component $\boldsymbol{W}\boldsymbol{W}^{\top}$, while $\epsilon_d>0$ ensures $\boldsymbol{R}\succ0$. 

The Koopman generator $\boldsymbol{A}_c$ is therefore not learned directly. Instead, the parameters $\{\boldsymbol{L},\boldsymbol{K},\boldsymbol{W}\}$ are optimized jointly, and $\boldsymbol{A}_c$ is assembled from~\eqref{eq:Ac_ph} at each forward pass, ensuring that the prescribed port-Hamiltonian structure is maintained throughout training.

\subsection{Structure-Preserving Discretization}
\label{sec:discretization}

The port-Hamiltonian structure in~\eqref{eq:Ac_ph} is defined in continuous time, whereas the learned Koopman model is ultimately used in discrete time for prediction and control. A generic numerical discretization, however, need not preserve the corresponding continuous-time dissipative structure.

We therefore discretize~\eqref{eq:koopman_continuous} using the implicit midpoint rule. Assuming that the control input is held constant over the sampling interval $[t_k,t_{k+1})$, the discrete dynamics satisfy
\begin{equation}
    \frac{\boldsymbol{z}_{k+1}-\boldsymbol{z}_{k}}{h} =\boldsymbol{A}_c\bar{\boldsymbol{z}}_k +\boldsymbol{B}_c\boldsymbol{u}_k,\quad\bar{\boldsymbol{z}}_k= \frac{\boldsymbol{z}_{k+1}+\boldsymbol{z}_{k}}{2}.
    \label{eq:implicit_midpoint}
\end{equation}
Since $\boldsymbol{S}\succ0$ and $\boldsymbol{R}\succeq0$, the structured generator in~\eqref{eq:Ac_ph} satisfies $\operatorname{Re}(\lambda_i(\boldsymbol{A}_c))\leq0$ for all $i$. Hence, $\boldsymbol{I}_{n_z}-\frac{h}{2}\boldsymbol{A}_c$ is nonsingular for any $h>0$. Rearranging~\eqref{eq:implicit_midpoint} gives $\boldsymbol{z}_{k+1} = \boldsymbol{A}\boldsymbol{z}_k +\boldsymbol{B}\boldsymbol{u}_k,$  where
\begin{align}
    \boldsymbol{A}&{=}\left(\boldsymbol{I}_{n_z} {-} \frac{h}{2}\boldsymbol{A}_c\right)^{-1}\left(\boldsymbol{I}_{n_z} + \frac{h}{2}\boldsymbol{A}_c\right) \label{eq:A_cayley}, \\
    \boldsymbol{B}&{=}h\left(\boldsymbol{I}_{n_z} {-} \frac{h}{2}\boldsymbol{A}_c\right)^{-1} \boldsymbol{B}_c.
    \label{eq:B_midpoint}
\end{align}
The mapping in~\eqref{eq:A_cayley} is the Cayley transform of the continuous-time Koopman generator. Since $\boldsymbol{B}_c$ is fixed by the known mechanical actuation structure, the discrete-time input matrix $\boldsymbol{B}$ is determined analytically from $\boldsymbol{A}_c$, $\boldsymbol{B}_c$, and $h$, rather than being independently identified from data. The structure-preserving properties of this discretization are established in Section~\ref{sec:guarantees}.

\subsection{Learning Objective}
\label{sec:learning_objective}
Having defined the structure-preserving discrete-time predictor, we next describe its data-driven identification. A neural encoder maps the generalized-momentum state $\boldsymbol{x}_k$ to nonlinear observables $\boldsymbol{\phi}_{\theta}(\boldsymbol{x}_k)$, which are appended to the physical state to form $\boldsymbol{z}_k=\boldsymbol{\Psi}_{\theta}(\boldsymbol{x}_k)$. The encoder parameters $\theta$ and the port-Hamiltonian parameters $\boldsymbol{\Theta}_{ph} = \{\boldsymbol{L},\boldsymbol{K},\boldsymbol{W}\}$ are learned jointly from transition data $\{(\boldsymbol{x}_k,\boldsymbol{u}_k,\boldsymbol{x}_{k+1})\}_{k=1}^{N_d}$. 

For each transition, the lifted state and control input are propagated through the discrete Koopman model to obtain the one-step predictions
\begin{equation}
    \hat{\boldsymbol{z}}_{k+1}=\boldsymbol{A}\boldsymbol{z}_k+\boldsymbol{B}\boldsymbol{u}_k,
    \;\hat{\boldsymbol{x}}_{k+1} = \boldsymbol{C}^{x}\hat{\boldsymbol{z}}_{k+1}.
    \label{eq:lifted_prediction}
\end{equation}
The model is trained by penalizing the prediction errors in both the physical-state and learned-observable coordinates. The physical-state prediction loss is defined as
\begin{equation}
    \mathcal{L}_{x}=\frac{1}{N_d}\sum_{k=1}^{N_d}\left\|\hat{\boldsymbol{x}}_{k+1}-\boldsymbol{x}_{k+1}\right\|_2^2. \label{eq:state_prediction_loss}
\end{equation}
To maintain consistency of the learned observables under the lifted dynamics, we additionally define the lifted-space consistency loss as
\begin{equation}
    \mathcal{L}_{\phi} =\frac{1}{N_d}\sum_{k=1}^{N_d}\left\|\hat{\boldsymbol{\phi}}_{k+1}-\boldsymbol{\phi}_{\theta}(\boldsymbol{x}_{k+1})\right\|_2^2,
    \label{eq:lifted_prediction_loss}
\end{equation}
where $\hat{\boldsymbol{\phi}}_{k+1}$ denotes the learned observer component of $\hat{\boldsymbol{z}}_{k+1}$. The overall training loss is
\begin{equation}
    \mathcal{L}=\mathcal{L}_{x}+\alpha_{\phi}\mathcal{L}_{\phi}+\lambda_1\|\boldsymbol{\Theta}_{ph}\|_1 + \lambda_2\|\boldsymbol{\Theta}_{ph}\|_F^2 ,
    \label{eq:total_training_loss}
\end{equation}
where $\alpha_{\phi}>0$ weights the lifted-space consistency loss. $\lambda_1,\lambda_2\geq0$ weight $L_1$ and $L_2$ regularization, respectively.

The discrete matrices $\boldsymbol{A}$ and $\boldsymbol{B}$ are differentiable functions of the structured continuous-time generator $\boldsymbol{A}_c$ through~\eqref{eq:A_cayley} and \eqref{eq:B_midpoint}. Hence, gradients of \eqref{eq:total_training_loss} propagate through the discretization to the learnable matrices $\boldsymbol{\Theta}_{ph}$ and the encoder $\boldsymbol{\phi}_{\theta}$. The complete learning architecture is shown in Fig.~\ref{fig:neural_ph}.
\begin{figure*}
    \centering
    \includegraphics[width=\textwidth]{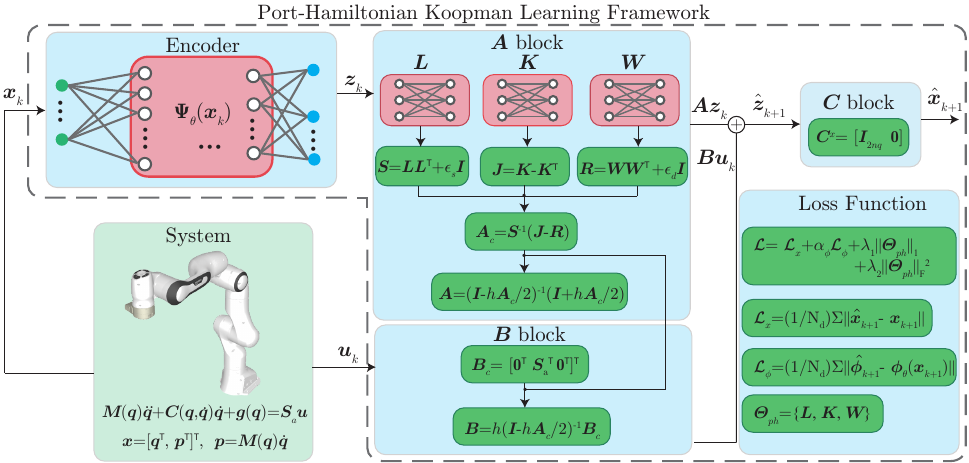}
    \caption{Neural architecture for learning the port-Hamiltonian
    Koopman model.}
    \label{fig:neural_ph}
\end{figure*}

Unlike existing approaches based on constrained dissipativity~\cite{hara2020learning}, stable Koopman parameterizations~\cite{10091950,9867865,sakibNoise}, or post-hoc model correction~\cite{yu2025dissapative}, the proposed formulation enforces the port-Hamiltonian storage, interconnection, and dissipation structure directly by construction. Therefore, no additional stability or passivity constraint is required in the training objective, and minimizing~\eqref{eq:total_training_loss}  identifies the predictive model while maintaining the prescribed port-Hamiltonian structure throughout training. 

\section{Dissipativity and Stability Guarantees}
\label{sec:guarantees}

The structured generator in~\eqref{eq:Ac_ph} inherits the continuous-time dissipativity of finite-dimensional port-Hamiltonian Koopman representations~\cite{preciado2026koopman}, as formalized below.

\begin{proposition}[Continuous-Time Lifted Dissipativity]
\label{prop:continuous_dissipativity}
Consider the lifted dynamics in~\eqref{eq:koopman_continuous} with the
structured generator in~\eqref{eq:Ac_ph}, where $\boldsymbol{S}{\succ}0$, $\boldsymbol{J}^{\top}{=}{-}\boldsymbol{J}$, and $\boldsymbol{R}\succeq0$.
For the storage function in~\eqref{eq:lifted_storage}, define
\begin{equation}
    \tilde{\boldsymbol{y}}
    =
    \boldsymbol{B}_c^{\top}
    \boldsymbol{S}\boldsymbol{z}.
    \label{eq:lifted_output}
\end{equation}
Then,
\begin{equation}
    \dot{\tilde{H}}
    =
    -\boldsymbol{z}^{\top}
    \boldsymbol{R}
    \boldsymbol{z}
    +
    \tilde{\boldsymbol{y}}^{\top}\boldsymbol{u}
    \leq
    \tilde{\boldsymbol{y}}^{\top}\boldsymbol{u}.
    \label{eq:continuous_dissipation}
\end{equation}
Hence, the lifted dynamics are dissipative with respect to the supply
rate $\tilde{\boldsymbol{y}}^{\top}\boldsymbol{u}$.
\end{proposition}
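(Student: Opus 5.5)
The argument is a direct computation of the time derivative of the quadratic storage~\eqref{eq:lifted_storage} along trajectories of~\eqref{eq:koopman_continuous}. No earlier result is needed beyond the structural properties $\boldsymbol{S}=\boldsymbol{S}^\top\succ0$, $\boldsymbol{J}^\top=-\boldsymbol{J}$, and $\boldsymbol{R}\succeq0$ assumed in the statement.

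The steps are as follows. First, differentiate $\tilde H$ and use the symmetry of $\boldsymbol{S}$ to get $\dot{\tilde H}=\boldsymbol{z}^\top\boldsymbol{S}\dot{\boldsymbol{z}}$. Second, substitute $\dot{\boldsymbol{z}}=\boldsymbol{A}_c\boldsymbol{z}+\boldsymbol{B}_c\boldsymbol{u}$ with $\boldsymbol{A}_c=\boldsymbol{S}^{-1}(\boldsymbol{J}-\boldsymbol{R})$ from~\eqref{eq:Ac_ph}. The factor $\boldsymbol{S}\boldsymbol{S}^{-1}$ cancels, which gives
\begin{equation*}
\dot{\tilde H}=\boldsymbol{z}^\top\boldsymbol{J}\boldsymbol{z}-\boldsymbol{z}^\top\boldsymbol{R}\boldsymbol{z}+\boldsymbol{z}^\top\boldsymbol{S}\boldsymbol{B}_c\boldsymbol{u}.
\end{equation*}
Third, the conservative term vanishes by skew-symmetry, since $\boldsymbol{z}^\top\boldsymbol{J}\boldsymbol{z}$ is a scalar equal to its own transpose $\boldsymbol{z}^\top\boldsymbol{J}^\top\boldsymbol{z}=-\boldsymbol{z}^\top\boldsymbol{J}\boldsymbol{z}$. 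Fourth, the input term is identified with the supply rate by noting $\boldsymbol{z}^\top\boldsymbol{S}\boldsymbol{B}_c\boldsymbol{u}=(\boldsymbol{B}_c^\top\boldsymbol{S}\boldsymbol{z})^\top\boldsymbol{u}=\tilde{\boldsymbol{y}}^\top\boldsymbol{u}$, using~\eqref{eq:lifted_output} and again $\boldsymbol{S}^\top=\boldsymbol{S}$. This yields the equality in~\eqref{eq:continuous_dissipation}.

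The inequality then follows from $\boldsymbol{R}\succeq0$, which gives $\boldsymbol{z}^\top\boldsymbol{R}\boldsymbol{z}\ge0$ for every $\boldsymbol{z}$. Integrating over any interval $[t_0,t_1]$ gives $\tilde H(\boldsymbol{z}(t_1))-\tilde H(\boldsymbol{z}(t_0))\le\int_{t_0}^{t_1}\tilde{\boldsymbol{y}}^\top\boldsymbol{u}\,dt$. Since $\tilde H\ge0$, this is dissipativity with respect to the supply rate $\tilde{\boldsymbol{y}}^\top\boldsymbol{u}$.

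I do not expect a genuine obstacle, as every step is algebraic. The one point needing care is the cancellation $\boldsymbol{S}\boldsymbol{S}^{-1}$ in the second step. It relies on the paper's choice to write the generator as $\boldsymbol{S}^{-1}(\boldsymbol{J}-\boldsymbol{R})$ rather than $(\boldsymbol{J}_{\mathrm{pH}}-\boldsymbol{R}_{\mathrm{pH}})\boldsymbol{S}$, and this choice is why the dissipation term appears as $\boldsymbol{z}^\top\boldsymbol{R}\boldsymbol{z}$ in the statement. If one instead started from the equivalent form with $\boldsymbol{J}_{\mathrm{pH}}$ and $\boldsymbol{R}_{\mathrm{pH}}$, the same term would read $(\boldsymbol{S}\boldsymbol{z})^\top\boldsymbol{R}_{\mathrm{pH}}(\boldsymbol{S}\boldsymbol{z})=\boldsymbol{z}^\top\boldsymbol{R}\boldsymbol{z}$, which is consistent.
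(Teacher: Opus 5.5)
Your proposal is correct and follows essentially the same route as the paper: differentiate the quadratic storage, substitute $\boldsymbol{S}\boldsymbol{A}_c=\boldsymbol{J}-\boldsymbol{R}$, drop $\boldsymbol{z}^{\top}\boldsymbol{J}\boldsymbol{z}$ by skew-symmetry, identify $\boldsymbol{z}^{\top}\boldsymbol{S}\boldsymbol{B}_c\boldsymbol{u}=\tilde{\boldsymbol{y}}^{\top}\boldsymbol{u}$, and conclude the inequality from $\boldsymbol{R}\succeq0$. Your integrated form of the dissipation inequality and your consistency check with the $\boldsymbol{J}_{\mathrm{pH}},\boldsymbol{R}_{\mathrm{pH}}$ representation are harmless additions that the paper omits.
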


\begin{proof}
From~\eqref{eq:Ac_ph},
\begin{equation}
    \boldsymbol{A}_c^{\top}\boldsymbol{S}
    +
    \boldsymbol{S}\boldsymbol{A}_c
    =
    -2\boldsymbol{R}
    \preceq0.
    \label{eq:continuous_lyapunov_identity}
\end{equation}
Moreover,
$\boldsymbol{S}\boldsymbol{A}_c
=\boldsymbol{J}-\boldsymbol{R}$.
Using
$\nabla_{\boldsymbol{z}}\tilde{H}
=\boldsymbol{S}\boldsymbol{z}$,
\begin{align}
    \dot{\tilde{H}}=
    \boldsymbol{z}^{\top}
    \boldsymbol{S}\dot{\boldsymbol{z}} = 
    \boldsymbol{z}^{\top}
    (\boldsymbol{J}-\boldsymbol{R})
    \boldsymbol{z}
    +
    \boldsymbol{z}^{\top}
    \boldsymbol{S}\boldsymbol{B}_c\boldsymbol{u}.
\end{align}
Since $\boldsymbol{J}^{\top}{=}{-}\boldsymbol{J}$, $\boldsymbol{z}^{\top}\boldsymbol{J}\boldsymbol{z}{=}0$.
Using~\eqref{eq:lifted_output} and $\boldsymbol{R}{\succeq}0$ gives~\eqref{eq:continuous_dissipation}.
\end{proof}

\begin{remark}[Interpretation of the Lifted Output]
The output
$\tilde{\boldsymbol{y}}
=\boldsymbol{B}_c^{\top}\boldsymbol{S}\boldsymbol{z}$
is power conjugate to $\boldsymbol{u}$ with respect to the learned
storage $\tilde{H}$. It need not coincide with the physical port output
$\boldsymbol{S}_a^{\top}\dot{\boldsymbol{q}}$, since $\tilde{H}$ is not
required to equal the mechanical Hamiltonian.
\end{remark}

\begin{theorem}[Dissipativity Preservation]
\label{thm:discrete_dissipation}
Under the conditions of
Proposition~\ref{prop:continuous_dissipativity}, the Cayley-midpoint
discretization in~\eqref{eq:implicit_midpoint} preserves the lifted
dissipativity property with respect to the same storage function
$\tilde{H}$. Specifically,
\begin{equation}
    \tilde{H}(\boldsymbol{z}_{k+1})
    -
    \tilde{H}(\boldsymbol{z}_{k})
    =
    -h\bar{\boldsymbol{z}}_k^{\top}
    \boldsymbol{R}
    \bar{\boldsymbol{z}}_k
    +
    h\bar{\boldsymbol{y}}_k^{\top}
    \boldsymbol{u}_k,
    \label{eq:exact_discrete_balance}
\end{equation}
where
$\bar{\boldsymbol{z}}_k{=}(\boldsymbol{z}_{k+1}{+}\boldsymbol{z}_{k})/2$
and $\bar{\boldsymbol{y}}_k{=}\boldsymbol{B}_c^{\top}\boldsymbol{S}\bar{\boldsymbol{z}}_k$.
Consequently,
\begin{equation}
    \tilde{H}(\boldsymbol{z}_{k+1})
    -
    \tilde{H}(\boldsymbol{z}_{k})
    \leq
    h\bar{\boldsymbol{y}}_k^{\top}\boldsymbol{u}_k.
    \label{eq:discrete_passivity}
\end{equation}
\end{theorem}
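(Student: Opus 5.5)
The plan is to use the polarization identity for the quadratic storage. Because $\tilde{H}(\boldsymbol{z})=\frac{1}{2}\boldsymbol{z}^{\top}\boldsymbol{S}\boldsymbol{z}$ with $\boldsymbol{S}$ symmetric, the storage increment over one step can be written exactly in terms of the midpoint and the increment:
\begin{equation*}
    \tilde{H}(\boldsymbol{z}_{k+1})-\tilde{H}(\boldsymbol{z}_{k})
    =\tfrac{1}{2}(\boldsymbol{z}_{k+1}-\boldsymbol{z}_k)^{\top}\boldsymbol{S}(\boldsymbol{z}_{k+1}+\boldsymbol{z}_k)
    =(\boldsymbol{z}_{k+1}-\boldsymbol{z}_k)^{\top}\boldsymbol{S}\bar{\boldsymbol{z}}_k .
\end{equation*}
The cross terms $\boldsymbol{z}_{k+1}^{\top}\boldsymbol{S}\boldsymbol{z}_k$ and $\boldsymbol{z}_{k}^{\top}\boldsymbol{S}\boldsymbol{z}_{k+1}$ cancel because $\boldsymbol{S}=\boldsymbol{S}^{\top}$. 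This identity plays the role of the chain rule $\dot{\tilde{H}}=\boldsymbol{z}^{\top}\boldsymbol{S}\dot{\boldsymbol{z}}$ from Proposition~\ref{prop:continuous_dissipativity}, and it is exact because the storage is quadratic. It is also why the midpoint rule is the right discretization here.

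Next, substitute the increment from~\eqref{eq:implicit_midpoint}, namely $\boldsymbol{z}_{k+1}-\boldsymbol{z}_k=h(\boldsymbol{A}_c\bar{\boldsymbol{z}}_k+\boldsymbol{B}_c\boldsymbol{u}_k)$. The increment is well defined: $\boldsymbol{I}_{n_z}-\frac{h}{2}\boldsymbol{A}_c$ is nonsingular, as noted in Section~\ref{sec:discretization}, so $\boldsymbol{z}_{k+1}$ exists uniquely and the implicit relation holds. Substituting gives
\begin{equation*}
    \tilde{H}(\boldsymbol{z}_{k+1})-\tilde{H}(\boldsymbol{z}_{k})
    = h\,\bar{\boldsymbol{z}}_k^{\top}\boldsymbol{A}_c^{\top}\boldsymbol{S}\bar{\boldsymbol{z}}_k
    + h\,\boldsymbol{u}_k^{\top}\boldsymbol{B}_c^{\top}\boldsymbol{S}\bar{\boldsymbol{z}}_k .
\end{equation*}

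For the first term, use $\boldsymbol{S}\boldsymbol{A}_c=\boldsymbol{J}-\boldsymbol{R}$ from~\eqref{eq:Ac_ph}. Transposing gives $\boldsymbol{A}_c^{\top}\boldsymbol{S}=(\boldsymbol{J}-\boldsymbol{R})^{\top}=-\boldsymbol{J}-\boldsymbol{R}$, so the quadratic form equals $-\bar{\boldsymbol{z}}_k^{\top}\boldsymbol{R}\bar{\boldsymbol{z}}_k$, since $\bar{\boldsymbol{z}}_k^{\top}\boldsymbol{J}\bar{\boldsymbol{z}}_k=0$ by skew-symmetry. Equivalently, the identity~\eqref{eq:continuous_lyapunov_identity} can be applied to the symmetric part of the form. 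The second term is $h\bar{\boldsymbol{y}}_k^{\top}\boldsymbol{u}_k$ by the definition of $\bar{\boldsymbol{y}}_k$. Together these give~\eqref{eq:exact_discrete_balance}, and~\eqref{eq:discrete_passivity} follows immediately from $\boldsymbol{R}\succeq0$ and $h>0$.

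No step is a serious obstacle. The only point needing care is the justification that the midpoint relation~\eqref{eq:implicit_midpoint} is actually satisfied by the explicit update~\eqref{eq:A_cayley}--\eqref{eq:B_midpoint}. This follows by multiplying the update through by $\boldsymbol{I}_{n_z}-\frac{h}{2}\boldsymbol{A}_c$ and rearranging, using the nonsingularity established in Section~\ref{sec:discretization}. Once that is in place, the argument is purely algebraic and exact, with no truncation error, which is the structural point of the theorem.
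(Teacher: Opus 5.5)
Your proof is correct and follows the paper's route. You use the exact quadratic identity $\tilde{H}(\boldsymbol{z}_{k+1})-\tilde{H}(\boldsymbol{z}_{k})=\bar{\boldsymbol{z}}_k^{\top}\boldsymbol{S}(\boldsymbol{z}_{k+1}-\boldsymbol{z}_k)$, substitute the midpoint relation, and cancel the skew-symmetric term via $\boldsymbol{S}\boldsymbol{A}_c=\boldsymbol{J}-\boldsymbol{R}$; your transposed form $\boldsymbol{A}_c^{\top}\boldsymbol{S}=-\boldsymbol{J}-\boldsymbol{R}$ and the remark on well-posedness of the implicit step are only cosmetic additions.
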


\begin{proof}
Using
$\tilde{H}(\boldsymbol{z})
=\frac{1}{2}\boldsymbol{z}^{\top}
\boldsymbol{S}\boldsymbol{z}$
and $\boldsymbol{S}=\boldsymbol{S}^{\top}$,
\begin{align}
    \tilde{H}(\boldsymbol{z}_{k+1})
    -\tilde{H}(\boldsymbol{z}_{k})
    &=
    \bar{\boldsymbol{z}}_k^{\top}
    \boldsymbol{S}
    (\boldsymbol{z}_{k+1}-\boldsymbol{z}_k)
    \nonumber\\
    &=
    h\bar{\boldsymbol{z}}_k^{\top}
    (\boldsymbol{J}-\boldsymbol{R})
    \bar{\boldsymbol{z}}_k
    +
    h\bar{\boldsymbol{y}}_k^{\top}\boldsymbol{u}_k
    \nonumber\\
    &=
    -h\bar{\boldsymbol{z}}_k^{\top}
    \boldsymbol{R}
    \bar{\boldsymbol{z}}_k
    +
    h\bar{\boldsymbol{y}}_k^{\top}\boldsymbol{u}_k,
\end{align}
where
$\bar{\boldsymbol{z}}_k^{\top}
\boldsymbol{J}\bar{\boldsymbol{z}}_k=0$
by skew symmetry of $\boldsymbol{J}$.
Since $\boldsymbol{R}\succeq0$,
\eqref{eq:discrete_passivity} follows.
\end{proof}

\begin{corollary}[Passive Contractivity]
\label{cor:passive_contractivity}
For $\boldsymbol{u}_k=\boldsymbol{0}$,
\begin{equation}
    \tilde{H}(\boldsymbol{z}_{k+1})
    -
    \tilde{H}(\boldsymbol{z}_{k})
    =
    -h\bar{\boldsymbol{z}}_k^{\top}
    \boldsymbol{R}
    \bar{\boldsymbol{z}}_k
    \leq0.
\end{equation}
Moreover, the discrete Koopman transition matrix satisfies
$\boldsymbol{A}^{\top}\boldsymbol{S}\boldsymbol{A}
-\boldsymbol{S}\preceq0$, so the zero-input lifted dynamics are
non-expansive in the $\boldsymbol{S}$-weighted norm.
\end{corollary}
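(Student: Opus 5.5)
The first claim follows by specializing Theorem~\ref{thm:discrete_dissipation} to $\boldsymbol{u}_k=\boldsymbol{0}$. The supply term $h\bar{\boldsymbol{y}}_k^{\top}\boldsymbol{u}_k$ in~\eqref{eq:exact_discrete_balance} then vanishes, which leaves $\tilde{H}(\boldsymbol{z}_{k+1})-\tilde{H}(\boldsymbol{z}_k)=-h\bar{\boldsymbol{z}}_k^{\top}\boldsymbol{R}\bar{\boldsymbol{z}}_k$. Since $h>0$ and $\boldsymbol{R}\succeq0$, this quantity is nonpositive.

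For the matrix inequality, we translate the trajectory-wise statement into a quadratic-form statement valid for every $\boldsymbol{z}_k$. With zero input the Cayley-midpoint update is $\boldsymbol{z}_{k+1}=\boldsymbol{A}\boldsymbol{z}_k$, with $\boldsymbol{A}$ from~\eqref{eq:A_cayley}. This matrix is well defined because $\boldsymbol{I}_{n_z}-\frac{h}{2}\boldsymbol{A}_c$ is nonsingular, as noted in Section~\ref{sec:discretization}. For an arbitrary $\boldsymbol{z}_k\in\mathbb{R}^{n_z}$, the pair $(\boldsymbol{z}_k,\boldsymbol{A}\boldsymbol{z}_k)$ satisfies~\eqref{eq:implicit_midpoint} with $\boldsymbol{u}_k=\boldsymbol{0}$, so the balance above applies to it. Substituting $\tilde{H}(\boldsymbol{z})=\frac12\boldsymbol{z}^{\top}\boldsymbol{S}\boldsymbol{z}$ and $\bar{\boldsymbol{z}}_k=\frac12(\boldsymbol{A}+\boldsymbol{I}_{n_z})\boldsymbol{z}_k$ gives
\begin{equation*}
\tfrac12\boldsymbol{z}_k^{\top}\left(\boldsymbol{A}^{\top}\boldsymbol{S}\boldsymbol{A}-\boldsymbol{S}\right)\boldsymbol{z}_k
=-\tfrac{h}{4}\,\boldsymbol{z}_k^{\top}(\boldsymbol{A}+\boldsymbol{I}_{n_z})^{\top}\boldsymbol{R}(\boldsymbol{A}+\boldsymbol{I}_{n_z})\boldsymbol{z}_k .
\end{equation*}
Both sides are quadratic forms in $\boldsymbol{z}_k$, and the identity holds for all $\boldsymbol{z}_k$. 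Taking symmetric parts, which is automatic here because both matrices are symmetric, yields the matrix identity
\begin{equation*}
\boldsymbol{A}^{\top}\boldsymbol{S}\boldsymbol{A}-\boldsymbol{S}=-\tfrac{h}{2}(\boldsymbol{A}+\boldsymbol{I}_{n_z})^{\top}\boldsymbol{R}(\boldsymbol{A}+\boldsymbol{I}_{n_z})\preceq0 .
\end{equation*}
The right-hand side is negative semidefinite because it is a congruence of $\boldsymbol{R}\succeq0$ scaled by $-h/2$.

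Non-expansiveness in the norm $\|\boldsymbol{z}\|_{\boldsymbol{S}}=(\boldsymbol{z}^{\top}\boldsymbol{S}\boldsymbol{z})^{1/2}$ follows directly, since $\|\boldsymbol{A}\boldsymbol{z}\|_{\boldsymbol{S}}^2\le\|\boldsymbol{z}\|_{\boldsymbol{S}}^2$. This is a genuine norm because $\boldsymbol{S}\succ0$.

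The only step needing care is the passage from the scalar balance to the matrix inequality. It requires that every $\boldsymbol{z}_k$ is admissible, so that the balance holds on all of $\mathbb{R}^{n_z}$, and this is guaranteed by the invertibility of $\boldsymbol{I}_{n_z}-\frac h2\boldsymbol{A}_c$. As a cross-check, one can instead verify the identity algebraically. Write $\boldsymbol{A}=\boldsymbol{P}^{-1}\boldsymbol{Q}$ with $\boldsymbol{P}=\boldsymbol{I}_{n_z}-\frac h2\boldsymbol{A}_c$ and $\boldsymbol{Q}=\boldsymbol{I}_{n_z}+\frac h2\boldsymbol{A}_c$. Then $\boldsymbol{Q}^{\top}\boldsymbol{S}\boldsymbol{Q}-\boldsymbol{P}^{\top}\boldsymbol{S}\boldsymbol{P}=h(\boldsymbol{A}_c^{\top}\boldsymbol{S}+\boldsymbol{S}\boldsymbol{A}_c)=-2h\boldsymbol{R}$ by~\eqref{eq:continuous_lyapunov_identity}, and congruence by $\boldsymbol{P}^{-1}$ gives $\boldsymbol{A}^{\top}\boldsymbol{S}\boldsymbol{A}-\boldsymbol{S}=-2h\boldsymbol{P}^{-\top}\boldsymbol{R}\boldsymbol{P}^{-1}\preceq0$. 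The two expressions agree because $\boldsymbol{A}+\boldsymbol{I}_{n_z}=2\boldsymbol{P}^{-1}$.
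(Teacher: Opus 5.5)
Your proof is correct. Your main argument differs from the paper's proof, which is exactly your closing cross-check: with $\boldsymbol{M}_h=\boldsymbol{I}_{n_z}-\frac{h}{2}\boldsymbol{A}_c$, the paper combines the Cayley formula with $\boldsymbol{A}_c^{\top}\boldsymbol{S}+\boldsymbol{S}\boldsymbol{A}_c=-2\boldsymbol{R}$. It obtains $\boldsymbol{A}^{\top}\boldsymbol{S}\boldsymbol{A}-\boldsymbol{S}=-2h\boldsymbol{M}_h^{-\top}\boldsymbol{R}\boldsymbol{M}_h^{-1}$ purely algebraically, without returning to trajectories. You instead apply the balance of Theorem~\ref{thm:discrete_dissipation} to the admissible pair $(\boldsymbol{z}_k,\boldsymbol{A}\boldsymbol{z}_k)$ for arbitrary $\boldsymbol{z}_k$. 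You then recover the matrix identity by polarization of symmetric quadratic forms.

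Your route buys two things. First, it shows the matrix inequality is just the quadratic-form restatement of the storage balance, so both halves of the corollary rest on the single skew-symmetry computation in the theorem. Second, it never needs the commutation $\boldsymbol{P}^{-1}\boldsymbol{Q}=\boldsymbol{Q}\boldsymbol{P}^{-1}$. The direct algebraic route does need it, because congruence by $\boldsymbol{P}^{-1}$ produces $(\boldsymbol{Q}\boldsymbol{P}^{-1})^{\top}\boldsymbol{S}(\boldsymbol{Q}\boldsymbol{P}^{-1})-\boldsymbol{S}$, not $\boldsymbol{A}^{\top}\boldsymbol{S}\boldsymbol{A}-\boldsymbol{S}$ with $\boldsymbol{A}=\boldsymbol{P}^{-1}\boldsymbol{Q}$. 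In your cross-check you should state that $\boldsymbol{P}$ and $\boldsymbol{Q}$ commute because both are polynomials in $\boldsymbol{A}_c$.

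The paper's form has its own advantage: it writes the right-hand side as a congruence by the manifestly nonsingular $\boldsymbol{M}_h^{-1}$. That makes the strict inequality in Corollary~\ref{cor:strict_dissipativity} immediate when $\boldsymbol{R}\succ0$. From your form $-\frac{h}{2}(\boldsymbol{A}+\boldsymbol{I}_{n_z})^{\top}\boldsymbol{R}(\boldsymbol{A}+\boldsymbol{I}_{n_z})$, you would need the identity $\boldsymbol{A}+\boldsymbol{I}_{n_z}=2\boldsymbol{P}^{-1}$, which you noted, to reach the same conclusion.
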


\begin{proof}
Let
$\boldsymbol{M}_h
{=}\boldsymbol{I}_{n_z}
{-}\frac{h}{2}\boldsymbol{A}_c$, which is nonsingular for $h{>}0$ as established in
Section~\ref{sec:discretization}.
Using~\eqref{eq:A_cayley} and
\eqref{eq:continuous_lyapunov_identity},
\begin{equation}
    \boldsymbol{A}^{\top}
    \boldsymbol{S}
    \boldsymbol{A}
    -
    \boldsymbol{S}
    =
    -2h
    \boldsymbol{M}_h^{-\top}
    \boldsymbol{R}
    \boldsymbol{M}_h^{-1}
    \preceq0.
    \label{eq:cayley_lyapunov_identity}
\end{equation}
The result follows from $\boldsymbol{R}\succeq0$.
\end{proof}

\begin{remark}[Forward-Euler Discretization]
\label{rem:forward_euler_discr}
For forward-Euler discretization,
$\boldsymbol{A}_{E}
=\boldsymbol{I}_{n_z}+h\boldsymbol{A}_c$.
Using~\eqref{eq:continuous_lyapunov_identity},
\begin{equation}
    \boldsymbol{A}_{E}^{\top}
    \boldsymbol{S}
    \boldsymbol{A}_{E}
    -
    \boldsymbol{S}
    =
    -2h\boldsymbol{R}
    +
    h^2
    \boldsymbol{A}_c^{\top}
    \boldsymbol{S}
    \boldsymbol{A}_c.
    \label{eq:euler_dissipation}
\end{equation}
The positive-semidefinite second term may offset the dissipative term
for finite $h$. Thus, unlike
Corollary~\ref{cor:passive_contractivity}, continuous-time
dissipativity does not generally imply discrete-time contractivity
under forward-Euler discretization.
\end{remark}

\begin{figure*}[!t]
    \centering
    \includegraphics[width=\linewidth]{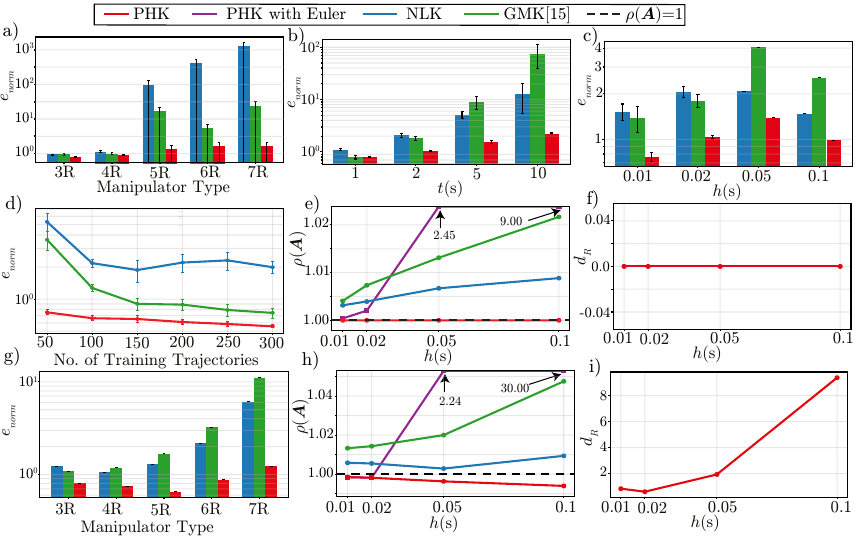}
    \caption{Open-loop prediction and structural characteristics of PHK (proposed), NLK and GMK~\cite{singh2025generalized} for ideal (a-f) and damped (g-i) manipulators. Ideal systems: normalized prediction error ($e_{\mathrm{norm}}$) a) across manipulator dimensions at $h=0.02$~s and $t=2$~s; b) across prediction horizons for 5R at $h=0.02$~s; c) across sampling periods for 5R at $t=2$~s; d) data efficiency for 5R at $h=0.02$~s and $t=2$~s; e) spectral radius of the learned 5R models, with out-of-range values annotated; and f) mean storage-normalized dissipation of PHK for 5R. Damped systems: g) normalized prediction error across manipulator dimensions at $h=0.05$~s and $t=2$~s; h) spectral radius of the learned 7R models, with out-of-range values annotated; and i) mean storage-normalized dissipation of PHK for 7R.}
    \label{fig:open_loop}
\end{figure*}

\begin{corollary}[Strict Dissipativity and Schur Stability]
\label{cor:strict_dissipativity}
If $\epsilon_d>0$ in~\eqref{eq:LKW_parameterization}, then
$\boldsymbol{R}
\succeq
\epsilon_d\boldsymbol{I}_{n_z}
\succ0$, and
\begin{align}
    &\dot{\tilde{H}}
    \leq
    \tilde{\boldsymbol{y}}^{\top}\boldsymbol{u}
    -
    \epsilon_d\|\boldsymbol{z}\|_2^2,
    \label{eq:strict_continuous_dissipation}\\
    &\tilde{H}(\boldsymbol{z}_{k+1})
    -
    \tilde{H}(\boldsymbol{z}_{k})
    \leq
    h\bar{\boldsymbol{y}}_k^{\top}\boldsymbol{u}_k
    -
    h\epsilon_d\|\bar{\boldsymbol{z}}_k\|_2^2.
    \label{eq:strict_discrete_dissipation}
\end{align}
Moreover,
$\boldsymbol{A}^{\top}\boldsymbol{S}\boldsymbol{A}
-\boldsymbol{S}\prec0$,
and therefore $\rho(\boldsymbol{A})<1$.
\end{corollary}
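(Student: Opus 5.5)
The plan is to derive each claim from results already established, substituting the strict lower bound $\boldsymbol{R}\succeq\epsilon_d\boldsymbol{I}_{n_z}$. First, from the parameterization \eqref{eq:LKW_parameterization}, $\boldsymbol{R}=\boldsymbol{W}\boldsymbol{W}^{\top}+\epsilon_d\boldsymbol{I}_{n_z}$. Since $\boldsymbol{W}\boldsymbol{W}^{\top}\succeq0$, this gives $\boldsymbol{R}\succeq\epsilon_d\boldsymbol{I}_{n_z}\succ0$ immediately. Hence $\boldsymbol{v}^{\top}\boldsymbol{R}\boldsymbol{v}\geq\epsilon_d\|\boldsymbol{v}\|_2^2$ for every $\boldsymbol{v}$.

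Next, I insert this bound into the exact balances already proved. Taking $\boldsymbol{v}=\boldsymbol{z}$ in \eqref{eq:continuous_dissipation} from Proposition~\ref{prop:continuous_dissipativity} yields \eqref{eq:strict_continuous_dissipation}. Taking $\boldsymbol{v}=\bar{\boldsymbol{z}}_k$ in \eqref{eq:exact_discrete_balance} from Theorem~\ref{thm:discrete_dissipation}, and multiplying by $h>0$, yields \eqref{eq:strict_discrete_dissipation}. Both steps are one line each.

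For the matrix inequality, I use the Cayley--Lyapunov identity \eqref{eq:cayley_lyapunov_identity} from the proof of Corollary~\ref{cor:passive_contractivity}:
\begin{equation*}
\boldsymbol{A}^{\top}\boldsymbol{S}\boldsymbol{A}-\boldsymbol{S}=-2h\boldsymbol{M}_h^{-\top}\boldsymbol{R}\boldsymbol{M}_h^{-1}.
\end{equation*}
Here $\boldsymbol{M}_h$ is nonsingular, so for any $\boldsymbol{v}\neq0$ the vector $\boldsymbol{w}=\boldsymbol{M}_h^{-1}\boldsymbol{v}$ is nonzero. Therefore
\begin{equation*}
\boldsymbol{v}^{\top}\bigl(\boldsymbol{A}^{\top}\boldsymbol{S}\boldsymbol{A}-\boldsymbol{S}\bigr)\boldsymbol{v}=-2h\,\boldsymbol{w}^{\top}\boldsymbol{R}\boldsymbol{w}\leq-2h\epsilon_d\|\boldsymbol{w}\|_2^2<0,
\end{equation*}
which gives $\boldsymbol{A}^{\top}\boldsymbol{S}\boldsymbol{A}-\boldsymbol{S}\prec0$. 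The only nontrivial point here is that this congruence preserves strict definiteness, and that follows from the invertibility of $\boldsymbol{M}_h$.

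Finally, I conclude $\rho(\boldsymbol{A})<1$ with the standard discrete Lyapunov argument. Let $\boldsymbol{A}\boldsymbol{v}=\lambda\boldsymbol{v}$ with $\boldsymbol{v}\in\mathbb{C}^{n_z}\setminus\{0\}$. The strict inequality extends to complex vectors through the Hermitian form, since the matrix is real symmetric. Then
\begin{equation*}
\boldsymbol{v}^{*}\bigl(\boldsymbol{A}^{\top}\boldsymbol{S}\boldsymbol{A}-\boldsymbol{S}\bigr)\boldsymbol{v}=\bigl(|\lambda|^2-1\bigr)\,\boldsymbol{v}^{*}\boldsymbol{S}\boldsymbol{v}<0.
\end{equation*}
Because $\boldsymbol{v}^{*}\boldsymbol{S}\boldsymbol{v}>0$ by $\boldsymbol{S}\succ0$, this forces $|\lambda|<1$ for every eigenvalue, hence $\rho(\boldsymbol{A})<1$.

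None of these steps is a real obstacle. The one place needing care is this last eigenvalue step, specifically the passage to complex eigenvectors; I handle it with the Hermitian quadratic form as above.
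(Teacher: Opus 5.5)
Your proposal is correct and follows the paper's proof. You substitute $\boldsymbol{R}\succeq\epsilon_d\boldsymbol{I}_{n_z}$ into the exact balances \eqref{eq:continuous_dissipation} and \eqref{eq:exact_discrete_balance}, then apply the Cayley--Lyapunov identity \eqref{eq:cayley_lyapunov_identity} with nonsingular $\boldsymbol{M}_h$ to get the strict Lyapunov inequality and Schur stability, spelling out the congruence and complex-eigenvector steps that the paper leaves implicit.
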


\begin{proof}
The two dissipation inequalities follow directly from
$\boldsymbol{R}
\succeq
\epsilon_d\boldsymbol{I}_{n_z}$, \eqref{eq:continuous_dissipation} and \eqref{eq:exact_discrete_balance}. Since $\boldsymbol{R}\succ0$ and  $\boldsymbol{M}_h$ is nonsingular, \eqref{eq:cayley_lyapunov_identity} yields $\boldsymbol{A}^{\top}\boldsymbol{S}\boldsymbol{A}-\boldsymbol{S}\prec0.$ As $\boldsymbol{S}\succ0$, the strict discrete Lyapunov inequality implies that $\boldsymbol{A}$ is Schur stable, and hence $\rho(\boldsymbol{A})<1$.
\end{proof}

\begin{remark}[Underactuated Systems] For $m<n_q$,
the constant actuation distribution $\boldsymbol{S}_a$ is retained
through $\boldsymbol{B}_c$ in~\eqref{eq:Bc}, without altering the
structured generator or its dissipativity guarantees.
\end{remark}

\section{Results}
\label{sec:results}
This section evaluates the proposed architecture in simulation and hardware for open-loop prediction and closed-loop tracking.

\subsection{Open-Loop Prediction and Spectral Stability}
\label{sec:open_loop_results}
We evaluate the open-loop prediction performance of the proposed Port-Hamiltonian Koopman (PHK) model against two Koopman baselines: a nominal linear Koopman model (NLK) using $[\boldsymbol{q}^{\top},\dot{\boldsymbol{q}}^{\top}]^{\top}$ and the generalized-momentum Koopman model (GMK)~\cite{singh2025generalized} using $[\boldsymbol{q}^{\top},\boldsymbol{p}^{\top}]^{\top}$. GMK~\cite{singh2025generalized} is the closest baseline since it uses the same generalized-momentum coordinates as PHK but without the port-Hamiltonian structure. We consider two settings: ideal manipulators without damping and manipulators with dissipative effects. The former isolates the ability of the learned structure to prevent artificial energy growth, while the latter evaluates its ability to represent physical dissipation through $\boldsymbol{R}$. For each manipulator, all methods use the same encoder architecture, lift dimension, and training trajectories, with $n_\phi$ scaled with $n_q$. Prediction error, denoted by $e_{\mathrm{norm}}$, is the state RMSE normalized component-wise by the corresponding test-set standard deviation. Results are averaged over individual runs, with error bars denoting one standard deviation. For PHK, we additionally report the mean storage-normalized dissipation $d_R= \frac{1}{n_z} \operatorname{tr}\!\left(\boldsymbol{S}^{-1/2}\boldsymbol{R}\boldsymbol{S}^{-1/2}\right),$ which quantifies the average dissipation strength relative to the learned storage metric. Thus, $d_R = 0$ indicates nearly conservative lifted dynamics, while larger values indicate stronger learned dissipation.

\begin{figure*}[!t]
    \centering
    \includegraphics[width=\linewidth]{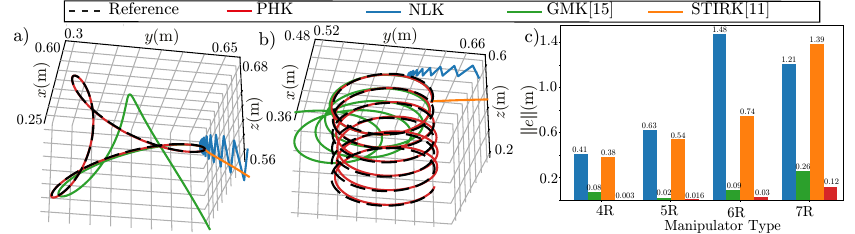}
    \caption{Closed-loop tracking performance for PHK, NLK, GMK~\cite{singh2025generalized}, and STIRK~\cite{sakibNoise}. 4R end-effector tracking for a) hypotrochoid and b) helix references at $h=0.05$~s; c) aggregate Cartesian tracking RMSE across manipulator dimensions, averaged over the hypotrochoid, petal, and helix trajectories and sampling periods $h\in\{0.01,\,0.02,\,0.05\}$~s.}
    \label{fig:closed_loop}
\end{figure*}


Figures~\ref{fig:open_loop}a-f summarize the results for ideal manipulators. PHK consistently achieves lower prediction error than NLK and GMK~\cite{singh2025generalized}, with the gap increasing with manipulator dimension (Fig.~\ref{fig:open_loop}a), prediction horizon (Fig.~\ref{fig:open_loop}b), and sampling period (Fig.~\ref{fig:open_loop}c). PHK also exhibits improved data efficiency (Fig.~\ref{fig:open_loop}d), reaching near-converged performance with approximately 150 training trajectories, compared with about 300 for GMK~\cite{singh2025generalized}, while NLK remains unconverged over the considered range. Since GMK~\cite{singh2025generalized} and PHK use the same generalized-momentum coordinates, their difference in performance isolates the effect of the port-Hamiltonian structure. The spectral results in Fig.~\ref{fig:open_loop}e further explain these trends. By Corollary~\ref{cor:passive_contractivity}, PHK satisfies $\rho(\boldsymbol{A})\leq1$, whereas the unconstrained baselines learn models with $\rho(\boldsymbol{A})>1$. The resulting unstable modes are amplified during recursive prediction, consistent with the increasing error at longer horizons and larger sampling periods. The forward-Euler realization of the same learned continuous-time PHK generator also becomes unstable as $h$ increases, while the Cayley-midpoint realization remains non-expansive, consistent with Remark~\ref{rem:forward_euler_discr}. Moreover, Fig.~\ref{fig:open_loop}f shows that $d_R$ remains close to zero, indicating that PHK introduces negligible artificial dissipation into the conservative dynamics.

Figures~\ref{fig:open_loop}g-i show the corresponding results for damped manipulators. PHK again achieves the lowest prediction error as the manipulator dimension increases (Fig.~\ref{fig:open_loop}g) and remains non-expansive across the tested sampling periods (Fig.~\ref{fig:open_loop}h), whereas the unconstrained baselines and the forward-Euler realization can yield unstable discrete-time models. In contrast to the ideal case, Fig.~\ref{fig:open_loop}i shows a clearly nonzero $d_R$, indicating that the learned $\boldsymbol{R}$ captures dissipative behavior when present in the underlying dynamics. Together, these results show that PHK provides accurate and data-efficient prediction while preserving the distinction between conservative and dissipative dynamics and maintaining spectrally well-behaved discrete-time predictors.


\subsection{Closed-Loop Tracking Performance}
\label{sec:closed_loop}
The closed-loop tracking performance is evaluated on ideal manipulators without damping using NLK, GMK~\cite{singh2025generalized}, and STIRK~\cite{sakibNoise} as baselines. STIRK~\cite{sakibNoise} provides a stability-constrained Koopman baseline, allowing us to assess whether stable lifted dynamics alone are sufficient for accurate control. Each learned model is embedded within the an MPC framework implemented using ACADOS~\cite{acadosverschueren2021}. The closed loop performance is evaluated across a range of reference trajectories (hypotrochoid, petal, and helix), manipulator dimensions (4R--7R), and sampling periods $h\in\{0.01,\,0.02,\,0.05\}$~s. GMK~\cite{singh2025generalized} is originally deployed with a generalized extended state observer (GESO) to compensate for lumped model mismatch and disturbances. In the present simulations, however, no observer is used for any method, allowing the closed-loop comparison to reflect differences in the learned predictive models directly.

Fig.~\ref{fig:closed_loop} summarizes the closed-loop results. The representative hypotrochoid and helix trajectories in Figs.~\ref{fig:closed_loop}a and \ref{fig:closed_loop}b for 4R at $h=0.05$~s show that PHK follows the reference paths more closely than the baselines. Figure~\ref{fig:closed_loop}c reports the Cartesian tracking RMSE averaged across all three reference trajectories and sampling periods, showing that PHK maintains the lowest Cartesian tracking error with increase in the improvement in performance as the manipulator dimension increases. Although STIRK~\cite{sakibNoise} enforces stability of the lifted dynamics, its larger tracking errors indicates that stability alone is insufficient to preserve the underlying energy structure, which can limit model accuracy and consequently degrade control performance for the ideal mechanical systems considered here. In contrast, PHK combines stable prediction with the port-Hamiltonian distinction between conservative interconnection and dissipation. Consistent with the open-loop results, the increasing performance gap with manipulator dimension indicates that these structural properties become increasingly important for accurate prediction and closed-loop control as system complexity grows.

\subsection{Experimental Validation}
\label{sec:franka_experiments}

\begin{figure*}[!t]
    \centering
    \includegraphics[width=\linewidth]{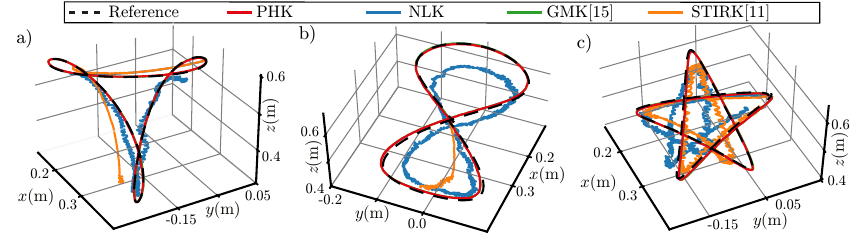}
    \caption{Experimental end-effector tracking on the Franka FR3 hardware using PHK, NLK, GMK~\cite{singh2025generalized} and STIRK~\cite{sakibNoise} for a) hypotrochoid, b) lissajous, and c) star trajectories. STIRK~\cite{sakibNoise} fails to complete the tracking tasks for the hypotrochoid and lissajous trajectories.}
    \label{fig:franka_real_tracking}
\end{figure*}

The framework is further validated on the 7-DoF Franka FR3. All methods use the same ACADOS-based MPC formulation, cost weights, prediction horizon, state and input constraints, and generalized extended state observer (GESO), following~\cite{singh2025generalized}, to compensate for unmodeled dynamics and sim-to-real mismatch. The sampling time is taken to be $h = 0.01$~s.

As shown in Fig.~\ref{fig:franka_real_tracking}, PHK and GMK~\cite{singh2025generalized} closely track all three references, whereas NLK exhibits substantially larger deviations. The corresponding Cartesian tracking RMSEs are $\{0.002,\,0.005,\,0.004\}$~m for PHK, $\{0.002,\,0.005,\,0.003\}$~m for GMK, and $\{0.046,\,0.071,\,0.110\}$~m for NLK. STIRK~\cite{sakibNoise} completes only the star trajectory, with a RMSE of $0.037$~m; for the remaining references, the MPC solver becomes infeasible and the runs are terminated. Since STIRK~\cite{sakibNoise} completes all simulation trajectories, this indicates that sim-to-real mismatch further degrades its less accurate
predictor beyond what the controller can accommodate.

Both PHK and STIRK~\cite{sakibNoise} are Schur stable, with $\rho(\boldsymbol{A})=0.997$ and $0.999$, respectively, whereas NLK and GMK~\cite{singh2025generalized} yield $\rho(\boldsymbol{A})=1.002$ and $1.003$. Despite their similar spectral radii, the markedly different tracking performance of NLK and GMK highlights the benefit of the generalized-momentum representation and its explicit mechanical actuation structure. Conversely, STIRK~\cite{sakibNoise} remains Schur stable yet fails two tracking tasks, showing that spectral stability alone does not guarantee sufficient control-relevant model fidelity. PHK combines these complementary properties by retaining the known mechanical actuation port while explicitly separating conservative interconnection from dissipation. The comparable tracking accuracy of PHK and GMK~\cite{singh2025generalized} under GESO indicates that disturbance compensation can mitigate residual model mismatch on hardware, although only PHK retains the nominal stability guarantees of Section~\ref{sec:guarantees}. 

Together with the observer-free simulation results, these experiments show that PHK improves the intrinsic structure of the learned predictor while remaining effective under sim-to-real uncertainty.

\section{Conclusion}
\label{sec:conc}
This work presented a port-Hamiltonian Koopman framework for Euler-Lagrange mechanical systems. By combining generalized-momentum coordinates with a structured finite-dimensional Koopman generator, proposed PHK preserves the known mechanical input port while embedding storage, interconnection, and dissipation directly into the lifted dynamics. A Cayley-midpoint discretization preserves the corresponding dissipativity properties in the discrete predictor. Theoretical analysis established continuous-time dissipativity, exact discrete storage-dissipation balance, zero-input contractivity, and Schur stability under strict dissipation. Simulations across manipulators of increasing dimension demonstrated improved open-loop prediction, data efficiency, spectral stability, and closed-loop tracking over Koopman baselines. Experiments on the Franka FR3 further validated the framework on hardware.

Overall, PHK yields accurate and structurally reliable predictive models, providing a principled framework for data-driven prediction and control of nonlinear mechanical systems.

\bibliographystyle{ieeetr}
\bibliography{citation.bib}
\end{document}